\def\BibTeX{{\rm B\kern-.05em{\sc i\kern-.025em b}\kern-.08em
    T\kern-.1667em\lower.7ex\hbox{E}\kern-.125emX}}
\theoremstyle{plain}
\newtheorem{theorem}{Theorem}
\newcommand{\rom}[1]{\uppercase\expandafter{\romannumeral #1\relax}}
\title{\LARGE \bf Frequency Response Data-Driven Disturbance Observer Design for Flexible Joint Robots}
\author{Deokjin Lee, Junho Song, Alireza Karimi, and Sehoon Oh
\thanks{$^{1}$Deokjin Lee, Junho Song and Sehoon Oh are with Department of Robotics Engineering,
        DGIST, Daegu, Korea 42988.
        (e-mail : {\tt\footnotesize \{djlee, optimus120, sehoon\}@dgist.ac.kr})
        (phone : +82-53-785-6209)
        }

\thanks{$^{2}$Alireza Karimi ({\tt alireza.karimi@epfl.ch}) is with the Automatic Control Laboratory, École Polytechnique Fédérale de Lausanne (EPFL), Switzerland.}
}
\begin{document}

\maketitle
\thispagestyle{empty}
\pagestyle{empty}

\begin{abstract}
Motion control of flexible joint robots (FJR) is challenged by inherent flexibility and configuration-dependent variations in system dynamics. While disturbance observers (DOB) can enhance system robustness, their performance is often limited by the elasticity of the joints and the variations in system parameters, which leads to a conservative design of the DOB. This paper presents a novel frequency response function (FRF)-based optimization method aimed at improving DOB performance, even in the presence of flexibility and system variability. The proposed method maximizes control bandwidth and effectively suppresses vibrations, thus enhancing overall system performance. Closed-loop stability is rigorously proven using the Nyquist stability criterion. Experimental validation on a FJR demonstrates that the proposed approach significantly improves robustness and motion performance, even under conditions of joint flexibility and system variation.
\end{abstract}

\begin{IEEEkeywords}
Data-driven control, Disturbance observer, Frequency domain optimization, Flexible joint robot, System variation 
\end{IEEEkeywords}

\markboth{IEEE TRANSACTIONS ON INDUSTRIAL ELECTRONICS}%
{}

\section{INTRODUCTION} \label{sec:1}
\IEEEPARstart{T}{he} demand for flexible joint robots (FJR) has rapidly grown, driven by their critical roles in human-robot interaction and operational safety~\cite{de2016robots}. However, controlling FJR is challenging due to their inherently complex dynamics, configuration-dependent variations, and vibrations caused by joint elasticity~\cite{spong1987modeling}. These factors restrict achievable control bandwidth and compromise robustness.

Various robust control methodologies, such as $\mathcal{H}_{\infty}$ control \cite{makarov2016modeling}, sliding mode control~\cite{rsetam2019cascaded}, and disturbance observer (DOB)-based methods~\cite{chen2015disturbance}, have been extensively studied to address these challenges. Among these, DOB stands out due to its simplicity and practical effectiveness. DOB approaches have successfully addressed issues like vibration suppression~\cite{bang2009robust}, friction compensation~\cite{kim2019model}, and decoupling~\cite{nozaki2013decoupling}. Nevertheless, extending traditional DOB strategies to multi-input-multiple-output (MIMO) robotic systems introduces additional considerations, including system variation, coupling effect, and dynamic complexity.

Recent advances in optimization-based DOB designs have systematically addressed these complexities. In particular,  time-domain optimization methods~\cite{yun2024disturbance, xie2021disturbance} have enhanced disturbance rejection capabilities by incorporating explicit constraints into the optimization process. However, these approaches have predominantly focused on error minimization. They do not adequately capture critical robotic system characteristics, such as resonance and configuration-dependent dynamic variations, which are essential for effective vibration suppression in FJR.

\begin{figure}
\begin{center}
    \includegraphics[width=1.0\columnwidth]{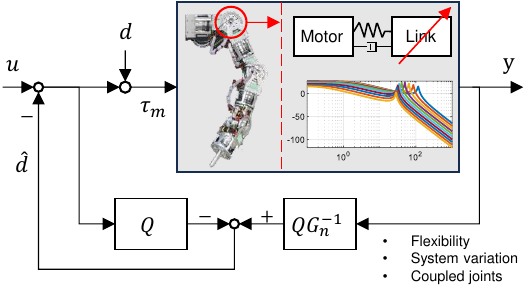}
	\caption{Disturbance observer framework for a flexible joint robot. The robot dynamics vary with configuration, leading to resonance changes that the observer must account for.}
	\label{fig:Fig1_Robot_DOB}
\end{center}
\vspace{-0.2in}
\end{figure}

To address these challenges, frequency-domain convex optimization methods have emerged, utilizing closed-loop transfer functions to comprehensively capture resonance behaviors and accurately represent system variations~\cite{boyd2016mimo}. Such frequency-domain convex optimization strategies have also been extended to DOB design for MIMO robotic systems, optimizing both the nominal system models and associated Q-filters~\cite{tesfaye2000sensitivity}. However, the effectiveness of these model-based optimization approaches is limited by their reliance on precise system identification, which becomes increasingly difficult in the presence of nonlinearities, such as joint friction and unmodeled dynamics.

In response to these limitations, Frequency Response Function (FRF)-based optimization methods have emerged, enabling automated tuning of DOB parameters according to specified performance criteria, without relying on model estimation~\cite{karimi2010fixed}. Such methods have shown promise in flexible systems, particularly by optimizing control bandwidth while systematically addressing resonance-induced constraints~\cite{wang2022frequency}. However, existing FRF-based optimization approaches predominantly address single-input-single-output (SISO) systems and neglect significant system variations. In robotic applications, configuration-dependent system variations critically influence system performance, necessitating explicit consideration within controller design frameworks. Although research in~\cite{schuchert2023frequency} has accounted for system variations, it has primarily emphasized task-space performance without adequately addressing joint-level dynamics and vibration suppression, which are fundamental control aspects for robotic joints.

Motivated by the limitations in existing methods, this paper proposes a novel FRF-based, data-driven optimization framework employing Linear Matrix Inequalities (LMI) specifically tailored for DOB design in FJR. The proposed method significantly improves control bandwidth, ensures robust vibration suppression, incorporates necessary stability margins, and adaptively addresses inertia variations across various robotic configurations.

The main contributions of this paper are summarized as follows:
\begin{itemize}
\item A data-driven DOB optimization methodology for FJR is proposed, utilizing FRF data to effectively handle unpredictable nonlinearities and system variations at the joint level, which is a fundamental control layer.
\item  The DOB optimization is performed within a unified framework to maximize control bandwidth and effectively suppress vibrations, resulting in faster and more robust motion in robotic systems.
\item Comprehensive experimental validation of the proposed DOB is conducted on both single-joint and multi-joint robot systems under various disturbance conditions, demonstrating practical effectiveness and robustness.
\end{itemize}

The remainder of this paper is organized as follows: Section \ref{sec:2} discusses critical challenges in DOB design specific to FJR. Section \ref{sec:3} introduces the proposed optimization methodology. Section \ref{sec:4} details the convexification approach for the proposed non-convex optimization problem. Section \ref{sec:5} presents experimental validation results, and finally, Section \ref{sec:6} summarizes the conclusions and outlines future research directions.

\section{Problem Formulation} \label{sec:2}
\subsection{Flexible Joint Robot}
The dynamics of FJR can be derived based on Spong's assumption~\cite{spong2020robot}:
\begin{eqnarray}
    \mathcal{M}(q)\Ddot{q}+\mathcal{C}(q,\Dot{q})\Dot{q} + \mathcal{G}(q) &=&\tau_s, \\
    \mathcal{B}\Ddot{\theta}+\mathcal{D}\Dot{\theta}-\tau_s &=& \tau_m,
    \label{eq:FJR dynamics}
\end{eqnarray}
where $q, \theta \in \mathbb{R}^m$ are the link and motor positions, respectively, and $m$ is the number of joints. The elastic joint torque is modeled as $\tau_s = \mathcal{K}(q - \theta)$, with $\mathcal{K} \in \mathbb{R}^{m \times m}$ denoting the joint stiffness matrix. The matrices $\mathcal{M}$, $\mathcal{C}$, $\mathcal{B}$, $\mathcal{D} \in \mathbb{R}^{m \times m}$ represent the link inertia, Coriolis/centrifugal effects, motor inertia, and motor damping, respectively. The vector $\mathcal{G} \in \mathbb{R}^m$ accounts for gravitational torques, and $\tau_m \in \mathbb{R}^m$ is the control input.

Due to the inherent complexity of configuration-dependent dynamics, practical controller design typically employs linear time-invariant (LTI) approximations at specific operating points. At these operating points, the robot dynamics simplify to a discrete-time LTI representation:
\begin{equation}
Y(z) = C_l(zI - A_l)^{-1}B_l U(z) + D_l U(z),
\end{equation}
where $G(z) = C_l(zI - A_l)^{-1}B_l + D_l$ represents the linearized LTI transfer function at a particular operating point. The corresponding frequency response at each operating condition is given by:
\begin{equation}
Y(e^{j\omega}) = G(e^{j\omega})U(e^{j\omega}),
\end{equation}
where $j = \sqrt{-1}$ is the imaginary unit and $\omega \in \Omega := (0, \pi/T_s]$, with $T_s$ denoting the sampling period.

Since the joint dynamics are structurally identical across all degrees of freedom, a single representative joint is considered for the remainder of the analysis without loss of generality.

\subsection{Data-driven Disturbance Observer Design}

\begin{figure}
	\begin{center}
		\subfloat[]{\includegraphics[width=0.45\textwidth]{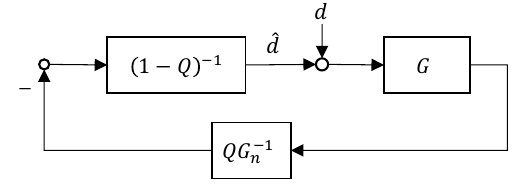}}\\	
        \vspace{-1em}
        \subfloat[]{\includegraphics[width=0.45\textwidth]{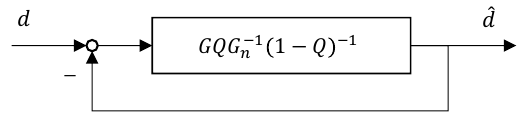}}
        \caption{Reformulated disturbance observer structures from Fig.~\ref{fig:Fig1_Robot_DOB}. (a) Analytical form used in deriving the optimization framework. (b) Closed-loop configuration for FRF data-driven optimization.}
        \label{fig:Reformulated_DOB}
	\end{center}
\end{figure}

Conventional DOB, depicted in Fig.~\ref{fig:Fig1_Robot_DOB}, estimates disturbances based on a nominal plant model $G_n$ and a low-pass filter $Q$. However, accurately identifying the nominal model is challenged by unmodeled dynamics and system variations. 

To overcome these limitations, a data-driven formulation is adopted in which both the nominal model $G_n$ and the $Q$ filter are optimized simultaneously. For this purpose, the traditional DOB structure is reformulated—illustrated in Fig.~\ref{fig:Reformulated_DOB}—as a closed-loop system described in terms of the actual disturbance $d$, the estimated disturbance $\hat{d}$. This structure introduces a loop gain $L$, given by:
\begin{align}
    L(z,h,t) 
    &= G(z)Q(z)G_{n}(z)^{-1}(1-Q(z))^{-1},\nonumber\\
    &= G(z)\frac{h_{qn}z^{qn}+\dots+h_0}{t_{qd}z^{qd}+\dots+t_0}\\
    &= G(z)\frac{N(z,h)}{D(z,t)}
    \label{eq:def_L}  
\end{align}
where $h = [h_{0},\dots, h_{pn}]^T$ and $t = [t_{0}, \dots, t_{pd}]^T$ are optimization parameters. This formulation preserves the dynamic characteristics of conventional DOB while enabling data-driven co-optimization of the nominal model and $Q$ filter.

Using the loop gain $L$, the corresponding closed-loop transfer functions are defined as:
\begin{align}
    T(e^{j\omega},h,t) 
    &=\frac{L(e^{j\omega},t,h)}{1+L(e^{j\omega},h,t)},\nonumber\\
    &=\frac{G(e^{j\omega})N(e^{j\omega}, h)}{D(e^{j\omega},t)+G(e^{j\omega})N(e^{j\omega},h)},     \label{eq:def_T}\\
    S(e^{j\omega},h,t)&＝1-T(e^{j\omega},t,h).
    \label{eq:def_S}
\end{align}
where $T$ quantifies the accuracy of disturbance estimation, while $S$ evaluates the robustness and performance of the system. The DOB design is then formulated as a convex optimization problem over the FRF data, with $h$ and $t$ as optimization variables.

\subsection{Challenges in DOB Design for Flexible Joint Robots} \label{sec:2.3}
\subsubsection{Flexibility and System Variation}
Designing an effective DOB for FJR demands addressing critical challenges, including joint flexibility, system variations, and coupling effects. As shown in Fig.~\ref{fig:Fig1_Robot_DOB}, joint flexibility induces resonance phenomena, generating vibrations that severely limit control bandwidth. Variations in link inertia across different configurations further shift resonance frequencies, adversely affecting control performance. Conventional model-based methods, reliant on FRF estimation, struggle due to unmodeled nonlinearities, emphasizing the necessity of data-driven approaches capable of dynamically adapting to these complexities.

\subsubsection{Conservative DOB design}
Traditional DOB designs are typically conservative to accommodate unmodeled nonlinearities and variable resonance frequencies. Such cautious parameter selections significantly restrict control bandwidth, resulting in suboptimal performance. While existing data-driven optimization methods address resonance and system variation using FRF data, controller performance is often limited by predefined constraints. Thus, optimizing control performance—specifically maximizing control bandwidth and minimizing overshoot—is essential.

Therefore, addressing these challenges necessitates an FRF data-driven DOB optimization framework capable of dynamically enhancing control performance.

\section{Optimization of Disturbance Observer for Flexible Joint Robot} \label{sec:3}
This section focuses on optimizing the DOB for the FJR by addressing key challenges, including vibration suppression, model variations, and the conservative design considerations discussed in Section~\ref{sec:2.3}. The goal is to develop an automatically tuned DOB that can adapt to dynamic changes in the robot configuration, maximize performance, suppress vibrations, and ensure robust stability.

\subsection{Constraints for Optimization}
The DOB optimization performance is guided by three principal constraints: stability margins, sensitivity function, and complementary sensitivity function. These constraints ensure robust stability, effective disturbance attenuation, and high-quality tracking performance, thereby forming the foundational criteria for optimal DOB design.

\subsubsection{Stability Margins}
Ensuring sufficient stability margins is critical to the robustness and reliability of the DOB. Stability margins are enforced through sensitivity function constraints represented as:
\begin{eqnarray}
    &W_{1} = \sigma, \\
    &|W_{1}S(e^{j\omega},h,t)| \leq 1 \qquad \forall \omega\in\Omega,
    \label{eq:Complementary and Sensitivity}
\end{eqnarray}
where the weighting function $W_1(\omega)$ is derived from the modulus margin, defined as the inverse of the infinity norm of the sensitivity function~\cite{garcia2004robust}. The parameter $\sigma$ is carefully selected to meet the required robustness criteria, ensuring stable and predictable system behavior under varying operational conditions and uncertainties.

\subsubsection{Sensitivity function}
The sensitivity function directly influences the DOB's ability to attenuate disturbances and achieve high-performance motion control. The shaping of the sensitivity function is accomplished by employing the weighting function:
\begin{eqnarray}
    & W_{2}(\omega,\zeta) = \frac{\zeta}{j\omega}, \\
    &|W_{2}(\omega,\zeta)S(e^{j\omega},h,t)| \leq 1 \qquad \forall \omega\in\Omega,
    \label{eq:Complementary and Sensitivity}
\end{eqnarray}
where $W_2$ functions as a first-order differentiator incorporating the optimization parameter $\zeta$. This approach maximizes the bandwidth of the sensitivity function, as depicted in Fig.~\ref{fig:S}, thereby improving disturbance rejection capabilities. By tuning the parameter $\zeta$, the DOB is optimized to balance maximum achievable bandwidth with the required stability constraints.

\begin{figure}
\begin{center}
	\includegraphics[width=1.0\columnwidth]{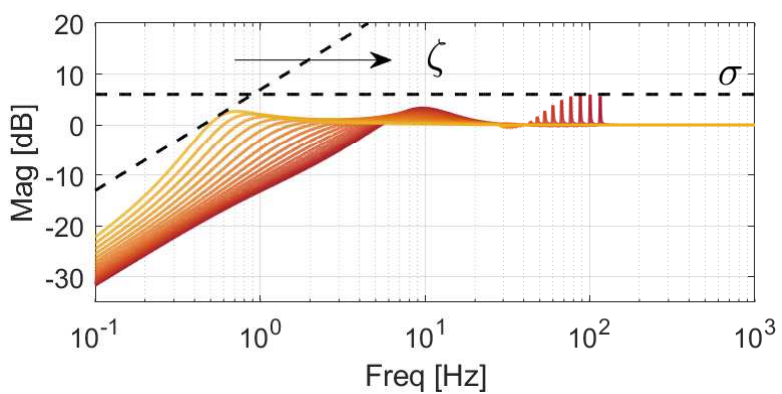}
	\caption{Sensitivity function of DOB under system variation, illustrating bandwidth $\zeta$ maximization while maintaining stability margin $\sigma$.}
	\label{fig:S}
\end{center}
\vspace{-0.2cm}
\end{figure}

\begin{figure}
\begin{center}
	\includegraphics[width=1.0\columnwidth]{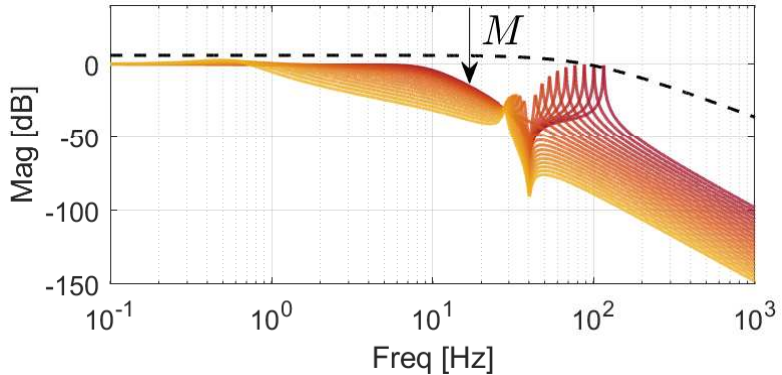}
	\caption{Complementary sensitivity function of DOB under system variation, demonstrating vibration suppression through overshoot $M$ minimization.}
	\label{fig:T}
\end{center}
\vspace{-0.2cm}
\end{figure}

\subsubsection{Complementary Sensitivity function}
Constraints on the complementary sensitivity function ensure the DOB achieves superior control performance with minimal tracking error and reduced overshoot:
\begin{eqnarray}
    &W_{3}(\omega, M) = M(\tau(j\omega)+1)^n,\\
    &|W_{3}(\omega,M)T(e^{j\omega},h,t)| \leq 1  \qquad \forall \omega\in\Omega,
    \label{eq:Complementary and Sensitivity}
\end{eqnarray}
where the weighting function $W_3$ is structured as an $n$-th order low-pass filter, parameterized by the preselected filter order $n$ and time constant $\tau$. Minimizing the optimization parameter $M$ is critical in reducing system overshoot and vibrations as demonstrated in Fig.~\ref{fig:T}, thereby enhancing the overall smoothness and precision of robot movements.

\subsection{Optimization Problem Formulation} \label{sec:3.4}
The optimization seeks to simultaneously maximize control bandwidth and minimize vibration, inherently managing the trade-off between these objectives. Bandwidth enhancement is achieved by maximizing $\zeta$, while minimizing vibrations and overshoot is addressed through maximizing parameter $M$. To balance these objectives, a positive weighting parameter $\alpha > 0$ is introduced, allowing designers to tune the trade-off according to specific application requirements. Without loss of generality, only one weighting parameter is used, as the other can be normalized. The resulting optimization problem is formulated as:
\begin{subequations} 
\begin{align}
    & \underset{h,\; t}{\text{maximize}} 
    & & \zeta + \alpha M \\
    & \text{subject to}
    & & 0 < \zeta < \tau^{-n},  \; M\leq 1, \label{eq:nonconv_constraint1} \\
    &&& \left\vert W_{1} S(e^{j\omega}, h, t) \right\vert \leq 1 & \forall \omega\in\Omega, 
    \label{eq:nonconv_S1} \\
    &&& \left\vert W_{2}(\omega, \zeta) S(e^{j\omega}, h, t) \right\vert \leq 1 & \forall \omega\in\Omega,
    \label{eq:nonconv_S2} \\
    &&& \left\vert W_{3}(\omega, M) T(e^{j\omega}, h, t) \right\vert \leq 1 & \forall \omega\in\Omega. \label{eq:nonconv_T}
\end{align}
\end{subequations}

This optimization formulation explicitly accounts for model uncertainties and ensures the DOB optimal performance across various robot configurations. The subsequent section addresses the convexification of this non-convex optimization problem to facilitate efficient and reliable computational solutions.

\section{Convexification of Constraints} \label{sec:4}
The non-convex constraints detailed in Section \ref{sec:3} are reformulated into a convex optimization framework employing second-order cone constraints and LMI. This approach facilitates efficient and robust computational solutions using semidefinite programming (SDP) methods.

\subsection{Second-Order Cone Constraints: LMI-Based Reformulation} \label{sec:4.1}
The non-convex constraints from Section~\ref{sec:3.4} can initially be represented in the general form:
\begin{equation} 
|F|^2 \leq \gamma P^*P, \quad \gamma > 0, 
\label{eq:convex_con1} 
\end{equation}
where $F \in \mathbb{C}^n$ and $P \in \mathbb{C}$ represent linear functions of the optimization parameters. This form is inherently non-convex. To achieve a convex reformulation, an inner convex approximation is introduced through an arbitrary selection $P_c \in \mathbb{C}$:
\begin{equation} 
    P^{*}P \geq P^{*}P_{c} + P_{c}^{*}P - P_c^{*}P_{c},
    \label{eq:convex_con2} 
\end{equation} 
which follows from $(P-P_c)^*(P-P_c)\geq0$. Applying this approximation converts~\eqref{eq:convex_con1} into a convex approximation:
\begin{equation} 
    |F|^2 < \gamma (P^{*}P_{c} + P_{c}^{*}P - P_c^{*}P_{c}).
    \label{eq:convex_con2} 
\end{equation} 

To simplify the notation, define: 
\begin{equation} 
    \Phi=P^{*}P_{c} + P_{c}^{*}P - P_c^{*}P_{c}.
    \label{eq:convex_con3} 
\end{equation}

This leads to the definition of a second-order rotated cone: 
\begin{equation} 
    \mathbb{C}_r = \{(F, \gamma, \Phi) \mid |F|^2 \leq \gamma\Phi, \gamma, \Phi > 0\},
    \label{eq:convex_con4} 
\end{equation}
which can be embedded within the cone of positive semidefinite matrices, enabling an equivalent LMI representation:
\begin{equation}
\left\{
    \begin{aligned}
         & |F|^2 \leq \gamma P^*P, \\
         & \gamma, \Phi > 0
    \end{aligned}
\right.
\quad
\Leftarrow
\quad
\begin{bmatrix} 
    \Phi & F \\
    F^T & \gamma
\end{bmatrix}
    \succeq 0.
\label{eq:SSE}
\end{equation}

The LMI reformulation enables the use of SDP solvers to efficiently handle the convexified constraints.

\subsection{Constraints for Stability Margin} \label{sec:4.2}
The stability margin constraint introduced in~\eqref{eq:nonconv_S1}:
\begin{eqnarray}
    |W_{1}S| \leq 1.
    \label{eq:ws_constraint}
\end{eqnarray}

Omitting the dependency on $\omega$ for brevity, the original non-convex inequality can be described in second-order cone constraints as~\eqref{eq:convex_con1}:
\begin{eqnarray}
    |\sigma D|^2 \leq P^*P,
\end{eqnarray}
in which 
\begin{eqnarray}
    P=D+GN\qquad and\qquad   P_c=D_c+GN_c.
\end{eqnarray}

Using the inner approximation \eqref{eq:convex_con2} and defining $\Phi$ via \eqref{eq:convex_con3}, the following convex inequality is obtained. 
\begin{eqnarray}
    |\sigma D|^2 \leq \Phi. 
\end{eqnarray}

The original nonlinear constraint \eqref{eq:nonconv_S1} can thus be transformed into the following sufficient LMI condition:
\begin{eqnarray}
\begin{bmatrix} 
         \Phi & \sigma D \\
        \sigma D^* & 1
    \end{bmatrix}
    \succeq 0.
    \label{eq:ws_constraint_final}
\end{eqnarray}

\subsection{Constraints for Sensitivity Function} \label{sec:4.3}
The sensitivity function constraint defined in~\eqref{eq:nonconv_S2}:
\begin{eqnarray}
    |W_{2}S| \leq 1. 
    \label{eq:ws_constraint}
\end{eqnarray}

The original non-convex inequality is reformulated to satisfy  convex-cone constraints, as discussed in Section~\ref{sec:4.2}:
\begin{eqnarray}
    \left|\frac{\zeta}{j\omega}\right|^2 |D|^2 \leq \Phi.
    \label{eq:LMI_S_2}
\end{eqnarray}

However, the presence of $|\zeta D|^2$ renders the formulation non-convex. To resolve this, an auxiliary optimization variable $\gamma_1$ is introduced, which satisfies the following inequality derived using the Schur complement~\cite{boyd1994linear}:
\begin{eqnarray}
    0 < \gamma_1 \leq 2\zeta_c^{-2} - \zeta_{c}^{-4} \zeta^2 \leq \zeta^{-2},
\end{eqnarray}
which follows from $(\zeta^{-2}-\zeta_c^{-2})(\zeta^{-2}-\zeta_c^{-2})\geq0$ for some arbitrary $\zeta_c$. The auxiliary constraint is enforced by the following inequality:
\begin{eqnarray}
    \begin{bmatrix} 
        2\zeta_{c}^2 - \gamma_{1} \zeta_c^4 & \zeta \\
        \zeta & 1
    \end{bmatrix}
    \succeq 0.
\end{eqnarray}

Incorporating the auxiliary constraint results in a convex constraint expressed by the following inequalities:
\begin{eqnarray}
    \left|\frac{\zeta}{\omega}\right|^2 |D|^2 \leq \frac{|D|^2}{|\omega|^2 \gamma_1} \leq \Phi.
\end{eqnarray}

Thus, the sufficient condition for the constraint~\eqref{eq:nonconv_S2} is formulated as LMI:
\begin{eqnarray}
\begin{bmatrix} 
         \Phi & D \\
        D^* & \omega^2 \gamma_1
    \end{bmatrix}
    \succeq 0.
    \label{eq:ws_constraint_final}
\end{eqnarray}

\subsection{Constraints for Complementary Sensitivity Function} \label{sec:4.4}
The complementary sensitivity constraint from~\eqref{eq:nonconv_T}:
\begin{eqnarray}
    |W_3 T| \leq 1,
    \label{eq:convex_reform}
\end{eqnarray}

Introducing $Y=|(\tau(j\omega) + 1)^n N|^2$, the convex optimization is obtained with the inner approximation from \eqref{eq:convex_con2}:
\begin{eqnarray}
    |MY|^2 \leq \Phi
\end{eqnarray}
To ensure convexity, the additional inequality on $\gamma_2$ is derived using the Schur complement:
\begin{eqnarray}
    \begin{bmatrix} 
        2M_c^2 - \gamma_{2} M_c^4 & M \\
        M & 1
    \end{bmatrix}
    \succeq 0,
\end{eqnarray}
with the condition $0 < \gamma_2 \leq 2M_c^{-2} - M_c^{-4} M^2 \leq M^{-2}$ for some arbitrary variable $M_c$. Thus, the reformulation becomes:
\begin{eqnarray}
   |MY|^2 \leq \frac{|Y|^2}{\gamma_2} \leq \Phi.
\end{eqnarray}

Finally, the original non-convex constraint in~\eqref{eq:nonconv_T} is then reformulated as:
\begin{eqnarray}
    \begin{bmatrix} 
        \Phi & Y\\
        Y^* & \gamma_2
    \end{bmatrix}
    \succeq 0,
    \label{eq:convex_reform_final}
\end{eqnarray}
which ensures the convex formulation of the problem.

\subsection{Closed-Loop Stability} \label{sec:4.5}
Closed-loop stability can be guaranteed based on the generalized Nyquist stability criterion. 
\begin{theorem}
    The closed-loop system with the plant model $G(z)$ and the controller $K(z)=ND^{-1}$ is stable if and only if the Nyquist plot of $1+G(z)K(z)$ satisfies the following conditions:
    \begin{enumerate}
        \item The Nyquist plot makes $N_G$ and $N_K$ counterclockwise encirclements of the origin, where $N_G$ and $N_K$ represent the number of poles of $G(z)$ and $K(z)$ outside the unit circle, respectively.
        \item The plot does not pass through the origin.
    \end{enumerate}
\end{theorem}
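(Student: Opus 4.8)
The plan is to derive the stability result as a direct application of the generalized (multivariable) Nyquist stability criterion to the specific closed-loop interconnection appearing in Fig.~\ref{fig:Reformulated_DOB}, where the open-loop transfer function is $L(z) = G(z)K(z)$ with $K(z) = N(z,h)D(z,t)^{-1}$. First I would recall the argument principle in its closed-loop form: for a discrete-time system, the number of closed-loop poles outside the unit disk equals the number of open-loop poles outside the unit disk minus the number of counterclockwise encirclements of the origin made by $\det(I + G(z)K(z))$ as $z$ traverses the standard Nyquist contour (the unit circle indented around any poles on it). Closed-loop internal stability is then equivalent to having zero closed-loop poles outside the unit disk, which forces the encirclement count to equal the total number of open-loop unstable poles.

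The key steps, in order, would be: (i) identify the open-loop poles outside the unit circle as precisely those of $G(z)$ (there are $N_G$ of them) together with those of $K(z) = ND^{-1}$ (there are $N_K$ of them, i.e. the roots of $D(z,t)$ outside the unit disk), noting that in the data-driven parametrization the plant and controller factors do not share unstable poles generically, so the counts add; (ii) invoke the generalized Nyquist criterion to conclude that closed-loop stability holds iff the Nyquist plot of $\det(I + GK) = 1 + G(z)K(z)$ (scalar, since a single representative joint is considered) encircles the origin counterclockwise exactly $N_G + N_K$ times; (iii) observe that the plot passing through the origin would correspond to a closed-loop pole on the unit circle, i.e. marginal instability, so this degenerate case must be excluded, which yields condition~2. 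Assembling (i)--(iii) gives both the necessity and sufficiency directions stated in the theorem.

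I would be careful to state the standing assumptions that make this clean: the SISO reduction justified earlier in the excerpt (``a single representative joint is considered''), well-posedness of the feedback loop (i.e. $1 + G(\infty)K(\infty) \neq 0$ so the contour is bounded away from the origin at infinity), and the absence of unstable pole-zero cancellations between $G$ and $K$ so that internal stability coincides with stability of the sensitivity $S$ and complementary sensitivity $T$ in~\eqref{eq:def_S}--\eqref{eq:def_T}. With these in place the proof is essentially a bookkeeping argument: count the unstable poles contributed by each factor, apply the argument principle to $1 + GK$, and match the required encirclement number.

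The main obstacle I anticipate is not the Nyquist machinery itself but justifying that the encirclement count decomposes additively as $N_G + N_K$ — that is, ruling out common unstable roots of $G$'s denominator and $D(z,t)$, and confirming that the optimization does not inadvertently place $D(z,t)$'s roots so as to cancel plant dynamics. In the data-driven setting $G$ is known only through FRF samples, so one must argue at the level of the true underlying plant; I would handle this by treating $N_G$ as a known structural quantity (the plant's inherent number of unstable modes, typically zero for a stable FJR plant) and by imposing, or noting as implicit in the problem data, that $L$ has no hidden unstable modes. Once that is granted, the remainder follows immediately from the standard statement of the generalized Nyquist theorem, and the ``if and only if'' is just the two directions of the argument-principle identity.
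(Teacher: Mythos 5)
Your proposal is correct and follows the standard argument-principle derivation of the (generalized) Nyquist criterion, which is exactly the route the paper relies on: the paper states this theorem as the classical result and defers its justification to the cited reference rather than proving it in-text. The caveats you flag --- well-posedness of the loop and the absence of unstable pole--zero cancellations between $G$ and $K = ND^{-1}$ so that the required encirclement count decomposes additively as $N_G + N_K$ --- are the right ones and are implicitly assumed in the paper's statement.
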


\begin{theorem}
    Given a plant $G$, an initial stabilizing controller $K_c=N_cD_c^{-1}$ with $D_c\neq 0, \forall\omega,$ and feasible solutions $N$ and $D$ satisfying the following inequality:
    \begin{equation}
        P^{*}P_c + P_c^*P > 0 \qquad \forall \omega\in\Omega,
    \end{equation}
    then the controller $K(z)$ stabilizes the closed-loop system if: 
    \begin{enumerate}
        \item $D\neq 0, \forall\omega$.
        \item The initial controller $K_c$ and the final controller $K$ share the same poles on the stability boundary.
        \item The order of $D$ is equal to the order of $D_c$
    \end{enumerate}
\end{theorem}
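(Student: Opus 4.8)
The plan is to derive the claim from the generalized Nyquist criterion of Theorem~1 by a continuity (homotopy) argument that tracks how the number of encirclements of the origin changes as one deforms the known stabilizing controller $K_c$ into the candidate controller $K$. Writing the return difference as a single rational function, $1+G(z)K(z)=(D+GN)D^{-1}=P\,D^{-1}$ and likewise $1+G(z)K_c(z)=P_c\,D_c^{-1}$, the net counterclockwise winding of $1+GK$ about the origin equals $\mathrm{wno}(P)-\mathrm{wno}(D)$, where $\mathrm{wno}(\cdot)$ denotes the net counterclockwise variation of the argument along the discrete-time Nyquist contour (the unit circle, indented around any unit-modulus roots). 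By Theorem~1 it then suffices to show $\mathrm{wno}(P)-\mathrm{wno}(D)=N_G+N_K$. Note first that the feasibility inequality $P^{*}P_c+P_c^{*}P>0$ forces $P\neq0$ (and $P_c\neq0$) on $\Omega$, which, together with Condition~1 ($D\neq0$ on $\Omega$), already guarantees that the plot of $1+GK$ is well defined and does not pass through the origin; only the encirclement count then remains.

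First I would compare $\mathrm{wno}(P)$ with $\mathrm{wno}(P_c)$. Since $P^{*}P_c+P_c^{*}P=2\,\mathrm{Re}(\overline{P_c}P)>0$ for all $\omega\in\Omega$, the straight-line family $P_\lambda:=(1-\lambda)P_c+\lambda P$ satisfies $\mathrm{Re}(\overline{P_c}P_\lambda)=(1-\lambda)|P_c|^2+\lambda\,\mathrm{Re}(\overline{P_c}P)>0$ for every $\lambda\in[0,1]$, so $P_\lambda(e^{j\omega})\neq0$ along the whole contour. Hence $\{P_\lambda\}$ is a homotopy in $\mathbb{C}\setminus\{0\}$ from $P_c$ to $P$, and, the winding number being a homotopy invariant, $\mathrm{wno}(P)=\mathrm{wno}(P_c)$. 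Equivalently, the phase of $P$ never deviates from that of $P_c$ by more than $\pi/2$, so the two curves encircle the origin equally often. (The family $P_\lambda$ need not correspond to any physical closed loop; it is used only as a path in $\mathbb{C}\setminus\{0\}$.)

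Next I would relate $\mathrm{wno}(D)-\mathrm{wno}(D_c)$ to $N_K-N_{K_c}$ via the argument principle applied to the polynomials $D$ and $D_c$. Assuming, without loss of generality, that $N$ and $D$ (respectively $N_c$ and $D_c$) are coprime, the unstable poles of $K$ are exactly the roots of $D$ outside the unit circle, so $\mathrm{wno}(D)$ equals the number of roots of $D$ inside the (indented) contour, namely $\deg D-N_K-b_D$, with $b_D$ the number of roots of $D$ on the stability boundary; likewise for $D_c$. Condition~3 ($\deg D=\deg D_c$) and Condition~2 (the same boundary poles, so $b_D=b_{D_c}$ and the indentation can be chosen identically for both) then give $\mathrm{wno}(D)-\mathrm{wno}(D_c)=N_{K_c}-N_K$. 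Combining this with $\mathrm{wno}(P)=\mathrm{wno}(P_c)$ and with the fact that $K_c$ stabilizes $G$---which by Theorem~1 means $\mathrm{wno}(P_c)-\mathrm{wno}(D_c)=N_G+N_{K_c}$---yields $\mathrm{wno}(P)-\mathrm{wno}(D)=N_G+N_{K_c}-(N_{K_c}-N_K)=N_G+N_K$, which is precisely the Nyquist condition for $K$ to stabilize $G$.

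I expect the main obstacle to be the winding-number bookkeeping on the discrete-time Nyquist contour, in particular a rigorous treatment of roots of $D$ on the unit circle (e.g., an integrator at $z=1$, permitted because $\Omega=(0,\pi/T_s]$ excludes $\omega=0$): one must indent the contour consistently for $D$ and $D_c$---which is exactly what Condition~2 enables---and verify that no pole--zero cancellation between $N$ and $D$ (or between $G$ and $K$) corrupts the identification of $N_K$ with the number of unstable roots of $D$. The remaining ingredients---homotopy invariance of the winding number and the argument-principle count of polynomial roots---are standard once this contour issue is settled.
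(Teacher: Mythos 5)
Your proposal reconstructs, in full detail, exactly the argument the paper relies on (via the citation to Karimi's data-driven control work and the winding-number discussion following the theorem): write $1+GK = PD^{-1}$, use the positivity constraint $P^*P_c+P_c^*P>0$ to get $\mathrm{wno}(P)=\mathrm{wno}(P_c)$ by a convex-combination homotopy, and use Conditions 2--3 to match the winding-number bookkeeping for $D$ versus $D_c$ so that the generalized Nyquist count for $K$ follows from that for $K_c$. This is correct and essentially the same approach as the paper's (cited) proof.
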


\begin{proof}
    See~\cite{karimi2017data}.
\end{proof}

Theorem~1 holds if $P(e^{j\omega})$ and $P_c(e^{j\omega})$ have the same winding number (wno) around the origin when $\omega$ traverses the Nyquist contour:
    \begin{eqnarray}
        wno(P)=wno(P_c),
        \label{eq:wno_P1}
    \end{eqnarray}
which is guaranteed by convexifing $P^*P$ around $P_c$, as $\Phi>0$ enforces: 
\begin{equation}
    P^*P_c+P_c^*P>0.
    \label{eq:wno_P2}
\end{equation}

For Theroem~2, $K$ and $K_c$ have the same number of stable poles if:
\begin{equation}
    wno(D) = wno(D_c).
    \label{eq:wno_D2}
\end{equation}
which is guaranteed by the constraint:
\begin{equation}
    D^*D_c + DD^*_c>0 \;\;\; \forall \omega\in \Omega.
    \label{eq:wno_D3}
\end{equation}
This ensures Condition~1 in Theorem~2. If the initial controller $K_c$ is stable, it guarantees that $K$ is also stable, thereby satisfying Condition~2 in Therorem~2. Condition~3 in Theorem~2 is generally not restrictive, as any lower-order initial controller can be augmented by adding zeros and poles at the origin in $N$ and $D$, ensuring the condition without affecting the initial controller.

\subsection{Convex Optimization Formulation} \label{sec:4.6}
Combining all convexified constraints, the optimization problem detailed in Section~\ref{sec:3.4} is presented in the standard LMI form:
\begin{subequations}
\label{eq:FinalLMIform}
\begin{align}
\underset{h,\;t}{\text{maximize}} 
\quad & \,\zeta \;+\;\alpha\,M 
\label{eq:cost_convex}\\
\text{subject to} \quad
& 0<\zeta<\tau^{-n}, \\
& \gamma_1,\gamma_2>0, \quad 0<M\le 1,
\label{eq:constraint_basic}\\
& \Phi \;>\;0,\quad D^*\,D_c \;+\;D\,D_c^* > 0,
\label{eq:constraint_stability}\\
&\begin{bmatrix} 
         \Phi & \sigma D \\
        \sigma D^* & 1
    \end{bmatrix}
    \succeq 0,
\label{eq:constraint_margin}\\
& \begin{bmatrix}
\Phi & D\\[3pt]
D^* & \omega^2\,\gamma_1
\end{bmatrix}
\succeq 0,\quad
\begin{bmatrix}
2\,\zeta_c^2 - \gamma_1\,\zeta_c^4 & \zeta\\[3pt]
\zeta & 1
\end{bmatrix}
\succeq 0,
\label{eq:constraint_sens}\\
& \begin{bmatrix}
\Phi & Y\\[3pt]
Y^* & \gamma_2
\end{bmatrix}
\succeq 0,\quad
\begin{bmatrix}
2\,M_c^2 - \gamma_2\,M_c^4 & M\\[3pt]
M & 1
\end{bmatrix}
\succeq 0,
\label{eq:constraint_compsens}
\end{align}
\end{subequations}
where each LMI condition applies across all relevant frequencies \(\omega\in\Omega\). The presented LMI formulation systematically incorporates stability criteria as well as sensitivity and complementary sensitivity constraints, directly reflecting the optimization objectives of maximizing bandwidth $\zeta$ and minimizing vibrations $M$. The optimization can be efficiently solved using standard SDP solvers, such as MOSEK, through iterative refinement of linearization points \(P_c\), \(\zeta_c\), and \(M_c\) until convergence.

\section{Validation} \label{sec:5}
The proposed method is validated using a 7-DOF FJR~\cite{lee2023exsler}. FRF data representing the linearized dynamics for each link are acquired and utilized for DOB optimization. The performance of the optimized DOB is evaluated against system variations and impact tolerance, and compared with a conventional model-based DOB~\cite{sariyildiz2018stability}.

\subsection{Experimental Setup} \label{sec:5.1}
The experimental platform is a 7-DOF FJR, depicted in Fig.~\ref{fig:Dual_arm_hug}, specifically focusing on the left arm. Each joint of the robot is equipped with both motor-side and load-side encoders, along with torque sensors. Real-time robot control is implemented through EtherCAT communication, operating reliably at a sampling frequency of 1 kHz. Joint flexibility in this platform predominantly originates from the gear transmission systems and integrated torque sensors, which significantly influence the robot's dynamic behavior.

The robot's motion ranges are constrained by physical hardware limitations and operational safety criteria, as detailed in Table~\ref{tbl:robot_motion_range}. Additionally, the inertia associated with each link varies substantially throughout the defined motion ranges. This variation is approximated using CAD software~\cite{SolidWorks2022}, highlighting significant inertia variations of approximately 100-fold, 400-fold, and 50-fold for the first, second, and third joints, respectively.

\begin{figure}[tb]
\begin{center}
	\includegraphics[width=0.7\columnwidth]{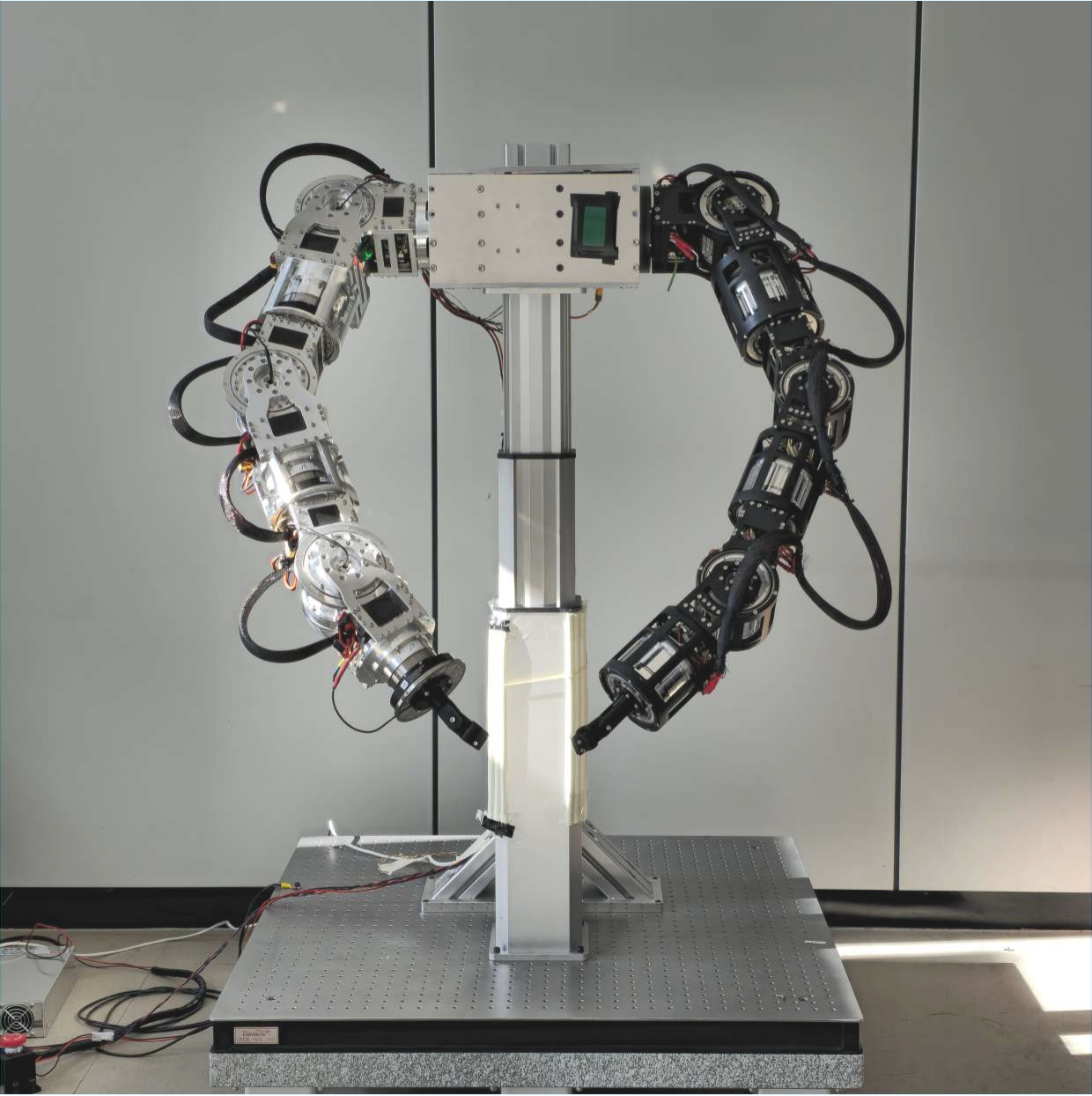}
	\caption{Configuration of the flexible joint robot. The left arm is utilized in this experiment.}
	\label{fig:Dual_arm_hug}
\end{center}
\vspace{-0.5cm}
\end{figure}

\begin{table}[t!]
	\centering
    \caption{Joint motion ranges and corresponding inertia variations for the 7-DOF flexible joint robot (left arm in Fig.~\ref{fig:Dual_arm_hug})}.
    \label{tbl:robot_motion_range}
	\begin{tabular}{c|ccc}
		\hline\hline
        Joint & Motion Range (deg) & Inertia Variation (kg$\cdot$m$^2$) \\  
        \hline
        1 & $-90$ $\sim$ $90$    & 0.03 $\sim$ 3.98 \\
        2 & $-86$ $\sim$ $16$    & 0.01 $\sim$ 4.15 \\
        3 & $-138$ $\sim$ $155$  & 0.014 $\sim$ 0.69 \\
        4 & $-80$ $\sim$ $108$   & 0.50 $\sim$ 0.70 \\
        5 & $-138$ $\sim$ $138$  & 0.008 $\sim$ 0.09 \\
        6 & $-97$ $\sim$ $108$   & 0.07 \\
        7 & $-172$ $\sim$ $172$  & 0.0002 \\
    \hline\hline
	\end{tabular}
\end{table}

\subsection{System Identification} \label{sec:5.2}
FRF data are acquired through system identification~\cite{pintelon2012system}, performed across multiple robot configurations. Specifically, a Schroeder multisine torque input is applied individually to each joint, and the resulting load-side velocity responses are measured. During these experiments, all other joints are held stationary using closed-loop position control to isolate the dynamics of the targeted joint. 

Identification results, illustrated in Fig.~\ref{fig:J2_FRF}, highlight a prominent resonance mode within the frequency range of 10–20 Hz attributable to joint flexibility, with variations dependent on robot configurations. Additionally, anti-resonance modes are identified in the range of 8–10 Hz. Notably, the DC gain also varies across configurations, increasing in correlation with friction.

Although conventional theoretical analysis anticipates only variations in resonance frequency due to link inertia changes, the FRF measurements exhibit significant deviations in both DC gain and anti-resonance characteristics. These empirical insights highlight the effectiveness of FRF-based DOB design in enhancing the accuracy and robustness of robot control performance.

\begin{figure}
	\begin{center}
		\subfloat{\includegraphics[width=0.5\textwidth]{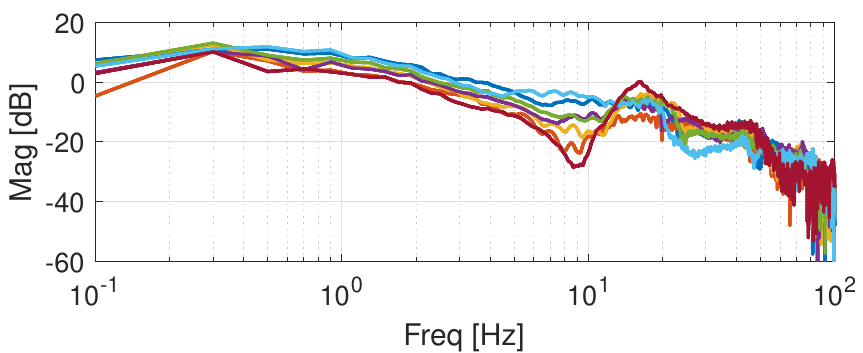}}\\	
        \vspace{-2em}
        \subfloat{\includegraphics[width=0.5\textwidth]{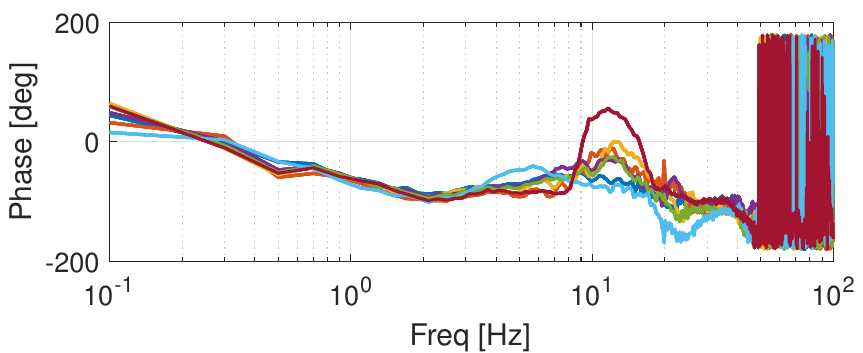}}
		\caption{Identified FRF for the 2nd joint of the flexible joint robot, illustrating variations at different operating points. Changes in anti-resonance and resonance modes are observed.}
        \label{fig:J2_FRF}
	\end{center}
\end{figure}

\subsection{DOB optimization for 7-DOF FJR} \label{sec:5.3}
DOB optimization is performed based on identified FRF data as shown in Fig.~\ref{fig:J2_FRF} to maximize control bandwidth and minimize overshoot for each joint, with a detailed focus on the second joint due to its representative dynamics. The optimized DOB performance is compared against a model-based DOB that uses a nominal two-mass system model, estimated at the median inertia.

Optimization results, shown in Fig.~\ref{fig:Fig_T_opt}, demonstrate that the optimized DOB achieves greater control bandwidth and reduced overshoot compared to the model-based DOB, which exhibits lower bandwidth at -3 dB and higher DC gain. These performance differences result from inaccuracies in the nominal model, particularly neglecting anti-resonance and friction effects observed in the FRF data shown in Fig.~\ref{fig:J2_FRF}. The task-efficient optimized DOB further improves control performance by reducing inertia variation, leading to higher bandwidth and DC gain.

\begin{figure}
	\begin{center}
		\subfloat[]{\includegraphics[width=0.5\textwidth]{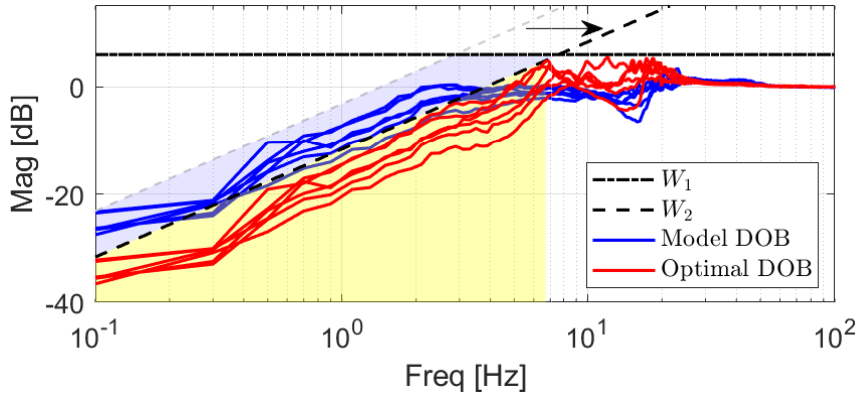}}\\	
        \vspace{-1em}
        \subfloat[]{\includegraphics[width=0.5\textwidth]{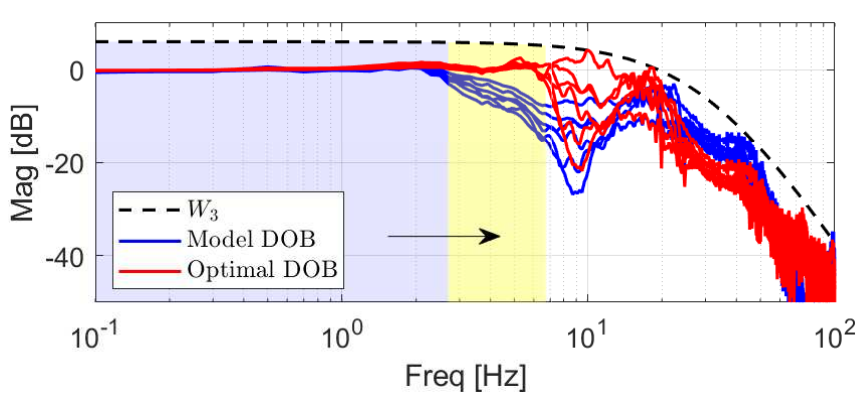}}
		\caption{DOB optimization results for the 2nd joint of the flexible joint robot. (a) Sensitivity function (b) Complementary sensitivity function. The optimization maximizes control performance by achieving higher control bandwidth and reduced overshoot while maintaining the modulus margin.}
        \label{fig:Fig_T_opt}
	\end{center}
\end{figure}

\subsection{Robustness against System Variation} \label{sec:5.4} 

The robustness of the proposed DOB against system variations, particularly changes in link inertia, is experimentally validated. The second joint is chosen for detailed analysis due to its representative dynamic behavior among all joints. Fig.~\ref{fig:Exp_variation} shows the robot configuration transitioning from minimum to maximum inertia, generating disturbances due to inertia variations.

Fig.\ref{fig:Exp_variation_result} compares the proposed DOB against the model-based DOB and no control scenario, displaying the disturbance torque, velocity deviations, and corresponding power spectra. The proposed DOB significantly reduces velocity deviations, as shown in Fig.\ref{fig:Exp_variation_result}(b). The model-based DOB maintains robustness against inertia variations but shows greater deviations due to inaccuracies in its nominal model.

In Fig.\ref{fig:Exp_variation_result}(c), the proposed DOB achieves superior disturbance attenuation from 0.1-7 Hz due to its enhanced rejection capabilities, as demonstrated in Fig.\ref{fig:Fig_T_opt}(a). However, a minor increase in power spectra between 7–8 Hz is observed due to the DOB's overshoot effect.

\begin{figure}
\begin{center}
    \includegraphics[width=1.0\columnwidth]{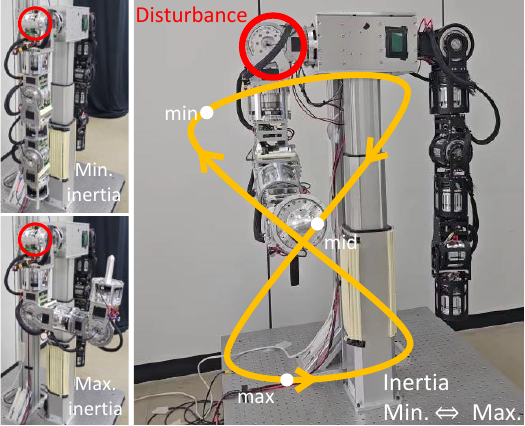}
	\caption{Inertia variation in 2nd joint according to robot movement.}
	\label{fig:Exp_variation}
\end{center}
\end{figure}

\begin{figure}
	\begin{center}
        \subfloat[]{\includegraphics[width=1.0\columnwidth]{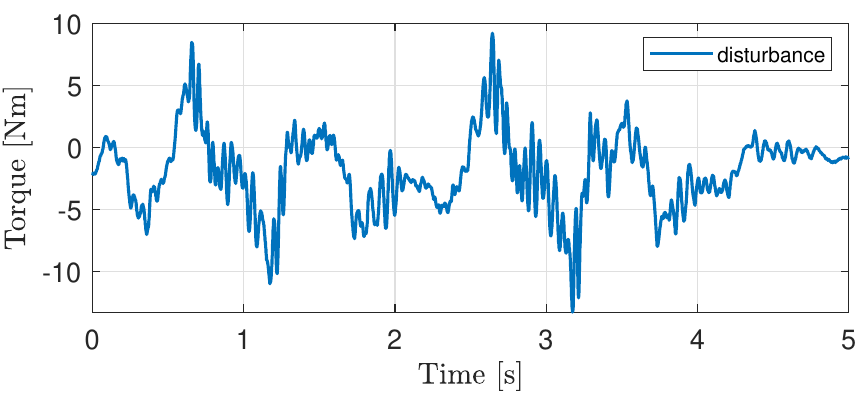}}\\
        \vspace{-1em}
        \subfloat[]{\includegraphics[width=1.0\columnwidth]{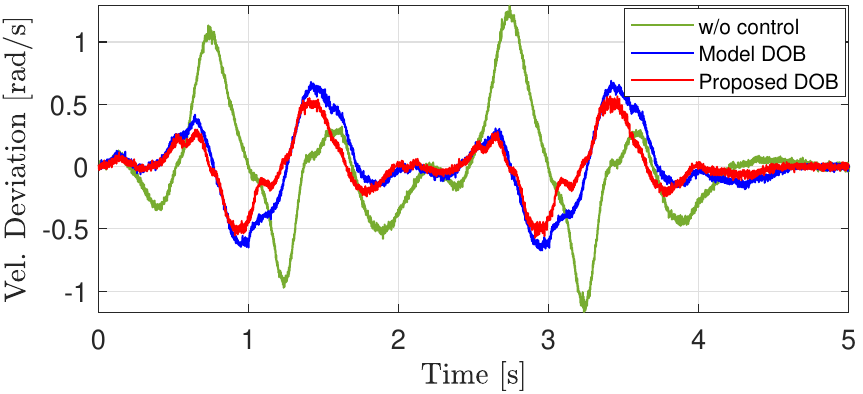}}\\
        \vspace{-1em}
        \subfloat[]{\includegraphics[width=1.0\columnwidth]{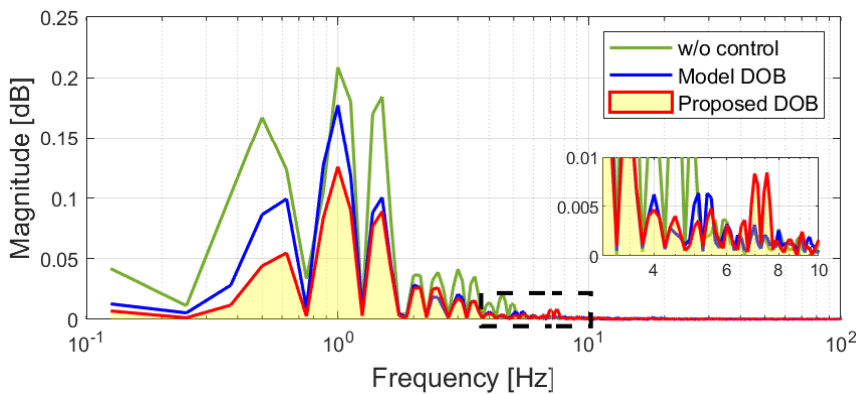}}
		\caption{Robustness against system variation in the 2nd joint: (a) Disturbance induced by inertia variation, (b) velocity deviation, and (c) corresponding power spectra showing disturbance attenuation performance.}
		\label{fig:Exp_variation_result}
	\end{center}
 \vspace{-0.2cm}
\end{figure}

\begin{figure*}[t]
    \centering
    \includegraphics[width=1.0\textwidth]{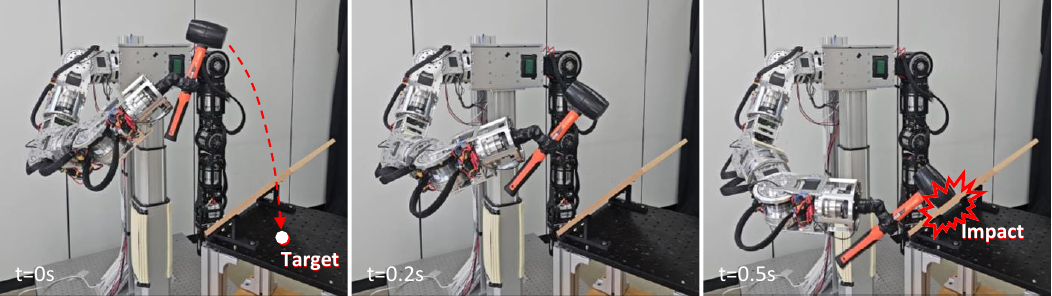}
    \caption{Impact tolerance during a high-speed hammering task. The target point is positioned behind a plate, and the robot strikes the plate at high speed, reaching it within 0.5 seconds.}
    \label{fig:Exp_Hammering}
\end{figure*}

\subsection{Impact Tolerance under High-speed Hammering}  \label{sec:5.5}

\begin{figure}
\begin{center}
        \includegraphics[width=1.0\columnwidth]{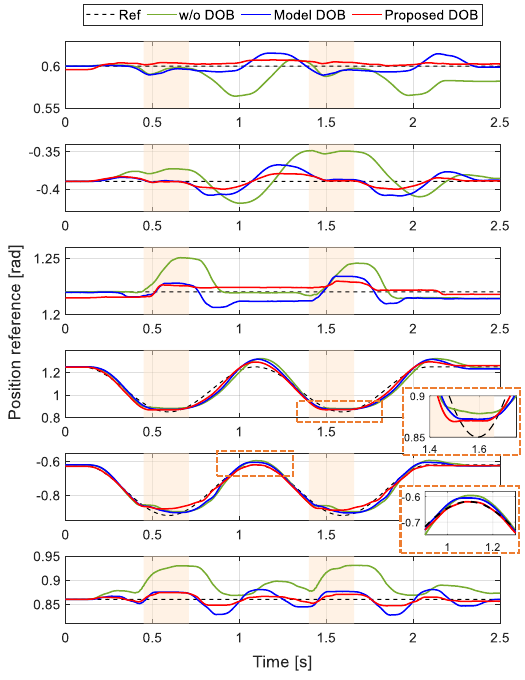}
	\caption{Position Tracking during high-speed hammering. Joints 4 and 5 execute dynamic hammering motions, while other joints remain stationary. The shaded area indicates the moment of impact.}
	\label{fig:Exp_hammering_test}
\end{center}
\end{figure}

Impact tolerance is evaluated through a high-speed hammering task, as depicted in Fig.~\ref{fig:Exp_Hammering}, in which the robot repeatedly impacts a barrier placed in front of a specified target. This task introduces complex disturbances, including impacts, joint coupling effects, and inertia variation.

Joint position tracking performance during the hammering operation is illustrated in Fig.~\ref{fig:Exp_hammering_test}. The implemented control strategy incorporates a DOB combined with feedforward control, utilizing a shared dynamic model with the DOB, complemented by feedback proportional-derivative (PD) control to enhance motion accuracy. Sinusoidal trajectories are prescribed for joints 4 and 5 to simulate the hammering action, with impact occurrences highlighted by shaded regions in the figure.

\begin{figure}
\begin{center}
	\includegraphics[width=1.0\columnwidth]{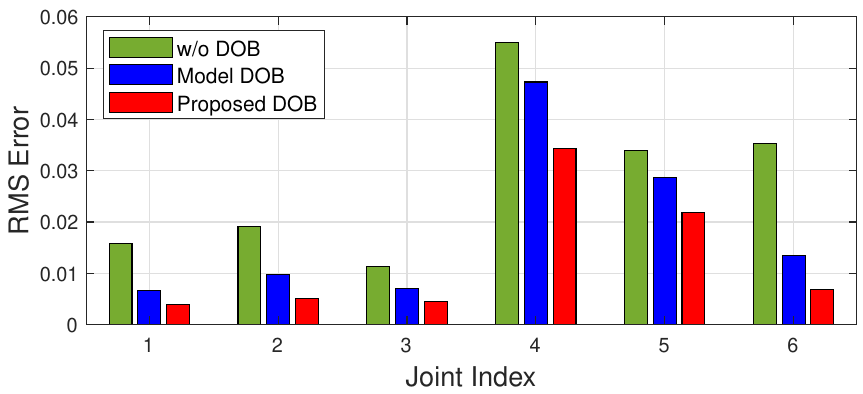}
	\caption{RMS error of position tracking performance}
	\label{fig:RMSE_pos}
\end{center}
\end{figure}

\begin{figure}
\begin{center}
	\includegraphics[width=1.0\columnwidth]{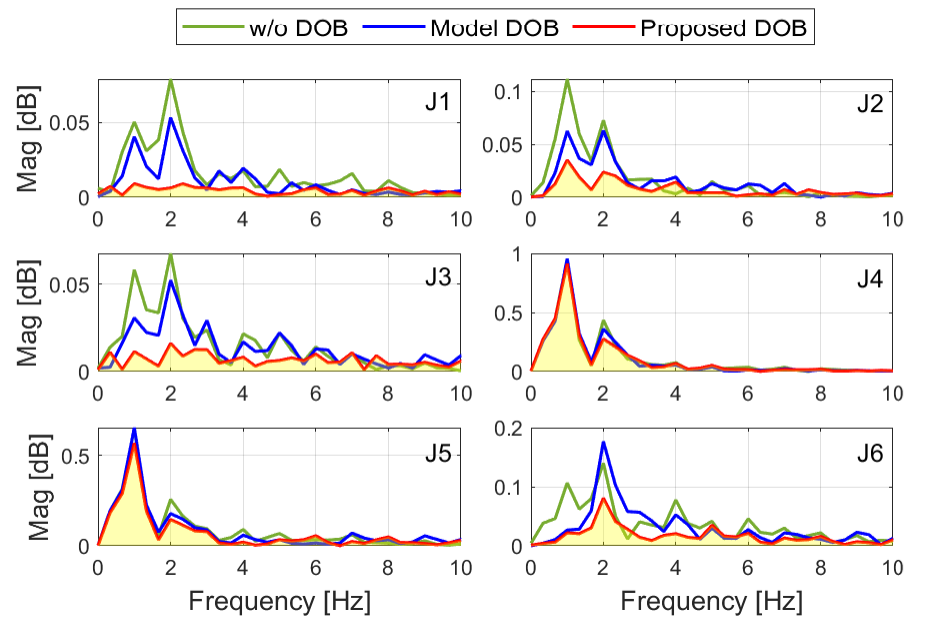}
	\caption{Power spectra of flexible joint robot under high-speed hammering: analysis of joint velocity and torque input.}
	\label{fig:Exp_Hammer_Vel_FRF}
\end{center}
\end{figure}

Notably, joint 4 exhibits significant deviations from its reference trajectory during impact events, particularly noticeable around 1.5 seconds due to barrier-induced obstruction. After each impact, the proposed DOB significantly improves trajectory tracking, notably in joint 5, around 1 second. Compared to the model-based DOB, the proposed DOB demonstrates substantially reduced deviations, particularly in joints 1–3 and 6, which are commanded to maintain stationary positions. This enhanced performance is attributed to the DOB's superior bandwidth and its robust capability in addressing integrated disturbances and uncertainties within the system dynamics.

Quantitative evaluations through Root Mean Square Error (RMSE), shown in Fig.\ref{fig:RMSE_pos}, confirm the superior tracking performance of the proposed DOB. Furthermore, Fig.\ref{fig:Exp_Hammer_Vel_FRF} depicts power spectra from motor torque input to joint velocity, highlighting the effectiveness of the proposed DOB. Specifically, joints 1, 2, 3, and 6 exhibit notably reduced power spectral densities, indicating enhanced control bandwidth. Within the frequency range of 0.1–1 Hz, joints 4 and 5 exhibit comparable spectral characteristics corresponding to the sinusoidal reference frequency. However, the proposed DOB significantly reduces spectral densities above 1 Hz, clearly indicating its superior capability in disturbance rejection.

\section{Conclusion} \label{sec:6}
An auto-tuning methodology for DOB design with optimal performance for FJR is proposed. By extracting system characteristics such as resonance and system variation from FRF, the proposed method optimizes DOB performance, maximizing bandwidth and reducing vibrations. 

The optimization is formulated in the frequency domain using closed-loop transfer functions that account for tracking performance, robustness, and stability. The non-convex optimization problem is reformulated and solved as a convex optimization through convexification. The superiority of optimized DOB is validated experimentally under complex dynamic conditions, including high-speed impacts and inertia variations.

Future research aims to integrate the proposed DOB with advanced feedback and feedforward controls, investigate MIMO identification and optimization for joint coupling, and explore LPV synthesis techniques for improved adaptability and robustness.

\section*{Acknowledgment}

* This work was supported by the National Research Foundation of Korea(NRF) grant funded by the Korea government(MSIT) (No. RS-2024-00354028).

\bibliography{reference}{}

\begin{thebibliography}{10}

\bibitem{de2016robots}
A.~De~Luca and W.~J. Book, ``Robots with flexible elements,'' {\em Springer handbook of robotics}, pp.~243--282, 2016.

\bibitem{spong1987modeling}
M.~W. Spong, ``Modeling and control of elastic joint robots,'' 1987.

\bibitem{makarov2016modeling}
M.~Makarov, M.~Grossard, P.~Rodriguez-Ayerbe, and D.~Dumur, ``Modeling and preview h backslash infty control design for motion control of elastic-joint robots with uncertainties,'' {\em IEEE Transactions on Industrial Electronics}, vol.~63, no.~10, pp.~6429--6438, 2016.

\bibitem{rsetam2019cascaded}
K.~Rsetam, Z.~Cao, and Z.~Man, ``Cascaded-extended-state-observer-based sliding-mode control for underactuated flexible joint robot,'' {\em IEEE Transactions on Industrial Electronics}, vol.~67, no.~12, pp.~10822--10832, 2019.

\bibitem{chen2015disturbance}
W.-H. Chen, J.~Yang, L.~Guo, and S.~Li, ``Disturbance-observer-based control and related methods—an overview,'' {\em IEEE Transactions on industrial electronics}, vol.~63, no.~2, pp.~1083--1095, 2015.

\bibitem{bang2009robust}
J.~S. Bang, H.~Shim, S.~K. Park, and J.~H. Seo, ``Robust tracking and vibration suppression for a two-inertia system by combining backstepping approach with disturbance observer,'' {\em IEEE transactions on industrial electronics}, vol.~57, no.~9, pp.~3197--3206, 2009.

\bibitem{kim2019model}
M.~J. Kim, F.~Beck, C.~Ott, and A.~Albu-Sch{\"a}ffer, ``Model-free friction observers for flexible joint robots with torque measurements,'' {\em IEEE Transactions on Robotics}, vol.~35, no.~6, pp.~1508--1515, 2019.

\bibitem{nozaki2013decoupling}
T.~Nozaki, T.~Mizoguchi, and K.~Ohnishi, ``Decoupling strategy for position and force control based on modal space disturbance observer,'' {\em IEEE Transactions on Industrial Electronics}, vol.~61, no.~2, pp.~1022--1032, 2013.

\bibitem{yun2024disturbance}
T.~H. Yun and M.~J. Kim, ``Disturbance observer with constraints,'' {\em IEEE Control Systems Letters}, 2024.

\bibitem{xie2021disturbance}
H.~Xie, L.~Dai, Y.~Lu, and Y.~Xia, ``Disturbance rejection mpc framework for input-affine nonlinear systems,'' {\em IEEE Transactions on Automatic Control}, vol.~67, no.~12, pp.~6595--6610, 2021.

\bibitem{boyd2016mimo}
S.~Boyd, M.~Hast, and K.~J. {\AA}str{\"o}m, ``Mimo pid tuning via iterated lmi restriction,'' {\em International Journal of Robust and Nonlinear Control}, vol.~26, no.~8, pp.~1718--1731, 2016.

\bibitem{tesfaye2000sensitivity}
A.~Tesfaye, H.~S. Lee, and M.~Tomizuka, ``A sensitivity optimization approach to design of a disturbance observer in digital motion control systems,'' {\em IEEE/ASME Transactions on mechatronics}, vol.~5, no.~1, pp.~32--38, 2000.

\bibitem{karimi2010fixed}
A.~Karimi and G.~Galdos, ``Fixed-order h∞ controller design for nonparametric models by convex optimization,'' {\em Automatica}, vol.~46, no.~8, pp.~1388--1394, 2010.

\bibitem{wang2022frequency}
X.~Wang, W.~Ohnishi, and T.~Koseki, ``Frequency response data based disturbance observer design: With application to a nonminimum phase motion stage,'' {\em IEEE/ASME Transactions on Mechatronics}, vol.~27, no.~6, pp.~5318--5326, 2022.

\bibitem{schuchert2023frequency}
P.~Schuchert and A.~Karimi, ``Frequency-domain data-driven position-dependent controller synthesis for cartesian robots,'' {\em IEEE Transactions on Control Systems Technology}, vol.~31, no.~4, pp.~1855--1866, 2023.

\bibitem{spong2020robot}
M.~W. Spong, S.~Hutchinson, and M.~Vidyasagar, {\em Robot modeling and control}.
\newblock John Wiley \& Sons, 2020.

\bibitem{garcia2004robust}
D.~Garcia, A.~Karimi, and R.~Longchamp, ``Robust pid controller tuning with specification on modulus margin,'' in {\em Proceedings of the 2004 American Control Conference}, vol.~4, pp.~3297--3302, IEEE, 2004.

\bibitem{boyd1994linear}
S.~Boyd, L.~El~Ghaoui, E.~Feron, and V.~Balakrishnan, {\em Linear matrix inequalities in system and control theory}.
\newblock SIAM, 1994.

\bibitem{karimi2017data}
A.~Karimi and C.~Kammer, ``A data-driven approach to robust control of multivariable systems by convex optimization,'' {\em Automatica}, vol.~85, pp.~227--233, 2017.

\bibitem{lee2023exsler}
D.~Lee, K.~Choi, J.~Kim, W.~Yun, T.~Kim, K.~Nam, and S.~Oh, ``Exsler: Development of a robotic arm for human skill learning,'' in {\em 2023 IEEE/ASME International Conference on Advanced Intelligent Mechatronics (AIM)}, pp.~209--214, IEEE, 2023.

\bibitem{sariyildiz2018stability}
E.~Sariyildiz, H.~Sekiguchi, T.~Nozaki, B.~Ugurlu, and K.~Ohnishi, ``A stability analysis for the acceleration-based robust position control of robot manipulators via disturbance observer,'' {\em IEEE/ASME Transactions on Mechatronics}, vol.~23, no.~5, pp.~2369--2378, 2018.

\bibitem{SolidWorks2022}
{Dassault Systèmes}, {\em {SolidWorks 2022}}.
\newblock Dassault Systèmes SolidWorks Corp., Waltham, Massachusetts, USA, 2022.
\newblock CAD software.

\bibitem{pintelon2012system}
R.~Pintelon and J.~Schoukens, {\em System identification: a frequency domain approach}.
\newblock John Wiley \& Sons, 2012.

\end{thebibliography}
\bibliographystyle{ieeetr}

\addtolength{\textheight}{-12cm}   
\end{document}